\newcommand{\pd}[2]{\partial{#1}/\partial{#2}}
\newcommand{\pdf}[2]{\frac{\partial{#1}}{\partial{#2}}}
\newcommand{\sam}[2]{{#1}^{(#2)}}
\newcommand{\dom}[1]{\mathcal{#1}}
\newcommand{\DD}{F}
\newcommand{\GG}{G}
\newcommand{\GN}{D}
\newcommand{\SN}{T}
\newcommand{\SNp}{\SN_{\dparams}}
\newcommand{\SNo}{\SN^{\star}}
\newcommand{\nonlin}{\nu}
\newcommand{\DV}{\mathcal{D}}
\newcommand{\iw}{w}
\newcommand{\iwo}{\iw^{\star}}
\newcommand{\iwt}{\tilde{\iw}}
\newcommand{\RR}[0]{\mathbb{R}}
\newcommand{\QQ}{\mathbb{Q}}
\newcommand{\PP}{\mathbb{P}}
\newcommand{\EE}{\mathbb{E}}
\newcommand{\ND}[2]{\mathcal{N}(#1, #2)}
\newcommand{\gparams}{\theta}
\newcommand{\dparams}{\phi}
\newcommand{\fd}{f}
\newcommand{\fdc}{{\fd}^{\star}}
\newcommand{\genprob}{\QQ}
\newcommand{\realprob}{\PP}
\newcommand{\gendens}{q}
\newcommand{\realdens}{p}
\newcommand{\genprobp}{\genprob_{\gparams}}
\newcommand{\gendensp}{\gendens_{\gparams}}
\newcommand{\gencondp}[2]{g_{\gparams}(#1 \mid #2)}
\newcommand{\realcondt}[2]{\tilde{p}(#1 \mid #2)}
\newcommand{\prior}{h}
\newcommand{\realdensest}{\tilde{\realdens}}
\newcommand{\dist}[3]{\mathcal{V}(#1, #2, #3)}
\DeclareMathOperator*{\argmax}{\arg \max}
\DeclareMathOperator*{\argmin}{\arg \min}
\newtheorem{theo}{Theorem}
\theoremstyle{definition}
\theoremstyle{definition}
\newtheorem{definition}{Definition}[section]
\title{Boundary-Seeking \\Generative Adversarial Networks}
\author{R Devon Hjelm\thanks{Denotes first-author contributions.} \\
MILA, University of Montr\'eal, IVADO\\
\texttt{erroneus@gmail.com}
\And Athul Paul Jacob\footnotemark[1]\\
MILA, MSR, University of Waterloo\\
\texttt{apjacob@edu.uwaterloo.ca}
\And Tong Che \\
MILA, University of Montr\'eal\\
\texttt{tong.che@umontreal.ca}
\And Adam Trischler \\
MSR\\
\texttt{adam.trischler@microsoft.com}
\And Kyunghyun Cho \\
New York University,\\ CIFAR Azrieli Global Scholar \\
\texttt{kyunghyun.cho@nyu.edu}
\And Yoshua Bengio \\
MILA, University of Montr\'eal, CIFAR, IVADO\\
\texttt{yoshua.bengio@umontreal.ca}
}
\begin{document}

\maketitle

\begin{abstract}
Generative adversarial networks~\citep[GANs, ][]{goodfellow2014generative} are a learning framework that rely on training a discriminator to estimate a measure of difference between a target and generated distributions.
GANs, as normally formulated, rely on the generated samples being completely differentiable w.r.t. the generative parameters, and thus do not work for discrete data.
We introduce a method for training GANs with discrete data that uses the estimated difference measure from the discriminator to compute importance weights for generated samples, thus providing a policy gradient for training the generator.
The importance weights have a strong connection to the decision boundary of the discriminator, and we call our method \emph{boundary-seeking GANs} (BGANs).
We demonstrate the effectiveness of the proposed algorithm with discrete image and character-based natural language generation. 
In addition, the boundary-seeking objective extends to continuous data, which can be used to improve stability of training, and we demonstrate this on Celeba, Large-scale Scene Understanding (LSUN) bedrooms, and Imagenet without conditioning.
\end{abstract}

\section{Introduction}
Generative adversarial networks~\citep[GAN, ][]{goodfellow2014generative} involve a unique generative learning framework that uses two separate models, a generator and discriminator, with opposing or \emph{adversarial} objectives.
Training a GAN only requires back-propagating a learning signal that originates from a learned objective function, which corresponds to the loss of the discriminator trained in an adversarial manner.
This framework is powerful because it trains a generator without relying on an explicit formulation of the probability density, using only samples from the generator to train. 

GANs have been shown to generate often-diverse and realistic samples even when trained on high-dimensional large-scale continuous data~\citep{radford2015unsupervised}.
GANs however have a serious limitation on the type of variables they can model, because they require the composition of the generator and discriminator to be fully differentiable.

With discrete variables, this is not true. For instance, consider using a step function at the end of a generator in order to generate a discrete value. 
In this case, back-propagation alone cannot provide the training signal, because the derivative of a step function is 0 almost everywhere. 
This is problematic, as many important real-world datasets are discrete, such as character- or word-based representations of language.
The general issue of credit assignment for computational graphs with discrete operations (e.g. discrete stochastic neurons) is difficult and open problem, and only approximate solutions have been proposed in the past~\citep{bengio2013estimating,gu2015muprop,gumbel1954statistical,jang2016categorical, maddison2016concrete, tucker2017rebar}.
However, none of these have yet been shown to work with GANs.
In this work, we make the following contributions:
\begin{itemize}
\item 
    We provide a theoretical foundation for \emph{boundary-seeking GANs} (BGAN), a principled method for training a generator of discrete data using a discriminator optimized to estimate an $f$-divergence~\citep{nguyen2010estimating, nowozin2016f}.
    The discriminator can then be used to formulate \emph{importance weights} which provide policy gradients for the generator.
    \item We verify this approach quantitatively works across a set of $\fd$-divergences on a simple classification task and on a variety of image and natural language benchmarks.
    \item We demonstrate that BGAN performs quantitatively better than WGAN-GP~\citep{gulrajani2017improved} in the simple discrete setting.
    \item We show that the boundary-seeking objective extends theoretically to the continuous case and verify it works well with some common and difficult image benchmarks.
    Finally, we show that this objective has some improved stability properties within  training and without.
\end{itemize}

\section{Boundary-Seeking GANs}
In this section, we will introduce boundary-seeking GANs (BGAN), an approach for training a generative model adversarially with discrete data, as well as provide its theoretical foundation.
For BGAN, we assume the normal generative adversarial learning setting commonly found in work on GANs~\citep{goodfellow2014generative}, but these ideas should extend elsewhere.

\subsection{Generative adversarial learning and problem statement}
Assume that we are given empirical samples from a target distribution, $\{\sam{x}{i} \in \dom{X}\}_{i=1}^M$, where $\dom{X}$ is the domain (such as the space of images, word- or character- based representations of natural language, etc.).
Given a random variable $Z$ over a space $\dom{Z}$ (such as $[0, 1]^m$), we wish to find the optimal parameters, $\hat{\gparams} \in \RR^d$, of a function, $\GG_{\gparams}: \dom{Z} \rightarrow \dom{X}$ (such as a deep neural network), whose induced probability distribution, $\genprob_{\gparams}$, describes well the empirical samples.

In order to put this more succinctly, it is beneficial to talk about a probability distribution of the empirical samples, $\realprob$, that is defined on the same space as $\genprob_{\gparams}$.
We can now consider the \emph{difference measure} between $\realprob$ and $\genprob_{\gparams}$, $D(\realprob$, $\genprob_{\gparams})$, so the problem can be formulated as finding the parameters:
\begin{align}
\hat{\gparams} = \argmin_{\gparams} D(\realprob, \genprob_{\gparams}).
\end{align}
Defining an appropriate difference measure is a long-running problem in machine learning and statistics, and choosing the best one depends on the specific setting.
Here, we wish to avoid making strong assumptions on the exact forms of $\realprob$ or  $\genprob_{\gparams}$, and we desire a solution that is scalable and works with very high dimensional data.
Generative adversarial networks~\citep[GANs,][]{goodfellow2014generative} fulfill these criteria by introducing a \emph{discriminator function}, $\GN_{\dparams} : \dom{X} \rightarrow \RR$, with parameters, $\dparams$, then defining a \emph{value function}, 
\begin{align}
\dist{\realprob}{\genprob_{\gparams}}{\GN_{\dparams}} = \EE_{\PP} \left[\log \GN_{\dparams}(x) \right] + \EE_{\prior(z)} \left[\log(1 - \GN_{\dparams}(G(z)) \right],
\end{align}
where samples $z$ are drawn from a simple prior, $\prior(z)$ (such as $U(0, 1)$ or $\ND{0}{1}$).
Here, $\GN_{\dparams}$ is a neural network with a sigmoid output activation, and as such can be interpreted as a simple binary classifier, and the value function can be interpreted as the negative of the \emph{Bayes risk}.
GANs train the discriminator to maximize this value function (minimize the mis-classification rate of samples coming from $\realprob$ or $\genprob_{\gparams}$), while the generator is trained to minimize it. In other words, GANs solve an optimization problem:
\begin{align}
    (\hat{\gparams}, \hat{\dparams}) = \argmin_{\gparams} \argmax_{\dparams} \dist{\realprob}{\genprob_{\gparams}}{\GN_{\dparams}}.
\end{align}
Optimization using only back-propogation and stochastic gradient descent is possible when the generated samples are completely differentiable w.r.t. the parameters of the generator, $\gparams$.

In the non-parametric limit of an optimal discriminator, the value function is equal to a scaled and shifted version of the Jensen-Shannon divergence, $2 * \DV_{JSD}(\realprob || \genprob_{\gparams}) - \log{4}$,\footnote{Note that this has an absolute minimum, so that the above optimization is a Nash-equilibrium} which implies the generator is minimizing this divergence in this limit. 
$f$-GAN~\citep{nowozin2016f} generalized this idea over all $\fd$-divergences, which includes the Jensen-Shannon (and hence also GANs) but also the Kullback–Leibler, Pearson $\chi^2$, and squared-Hellinger.
Their work provides a nice formalism for talking about GANs that use $\fd$-divergences,  which we rely on here.

\begin{definition}[$\fd$-divergence and its dual formulation]
    Let $\fd: \RR_{+} \rightarrow \RR$ be a convex lower semi-continuous function and $\fdc: \dom{C} \subseteq \RR \rightarrow \RR$ be the convex conjugate with domain $\dom{C}$.
    Next, let $\dom{\SN}$ be an arbitrary family of functions, $\dom{\SN} = \{T : \dom{X} \rightarrow \dom{C}\}$.
    Finally, let $\PP$ and $\QQ$ be distributions that are completely differentiable w.r.t. the same Lebesgue measure, $\mu$.\footnote{$\mu$ can be thought of in this context as $x$, so that it can be said that $\PP$ and $\QQ$ have density functions on $x$.}
    The $\fd$-divergence, $\DV_{\fd}(\realprob || \genprob_{\gparams})$, generated by $\fd$, is bounded from below by its dual representation~\citep{nguyen2010estimating},
    \begin{align}
        \DV_{\fd}(\PP || \QQ) = \EE_{\QQ}\left[f \left( \frac{d\PP / d\mu}{d\QQ / d\mu} \right) \right] 
        \ge \sup_{\SN \in \dom{\SN}} (\EE_{\realprob}[\SN(x)] - \EE_{\genprob}[\fdc(\SN(x))]).
        \label{eq:fdiv}
    \end{align}
    \label{def:fdiv}
\end{definition}
The inequality becomes \emph{tight} when $\dom{T}$ is the family of all possible functions.
The dual form allows us to change a problem involving likelihood ratios (which may be intractable) to an maximization problem over $\dom{T}$.
This sort of optimization is well-studied if $\dom{T}$ is a family of neural networks with parameters $\dparams$ (a.k.a., \emph{deep learning}), so the supremum can be found with gradient ascent~\citep{nowozin2016f}.
\begin{definition}[Variational lower-bound for the $\fd$-divergence]
Let $\SN_{\dparams} = \nonlin \circ \DD_{\dparams}$ be a function, which is the composition of an activation function, $\nonlin : \RR \rightarrow \dom{C}$ and a neural network, $\DD_{\dparams} : \dom{X} \rightarrow \RR$.
We can write the \emph{variational lower-bound} of the supremum in Equation~\ref{eq:fdiv} as~\footnote{It can be easily verified that, for $\nonlin(y) = -\log{(1 + e^{-y})}$, $f(u) = u \log{u} + (1 + u) \log{(1 + u)}$, and setting $T = \log{D}$, the variational lower-bound becomes exactly equal to the GAN value function.}:
\begin{align}
    \DV_{\fd}(\PP || \QQ_{\gparams}) \geq \EE_{\realprob}[\nonlin \circ \DD_{\dparams}(x)] - \EE_{\genprob_{\gparams}}[\fdc(\nonlin \circ \DD_{\dparams}(x))] = \dist{\realprob}{\genprob_{\gparams}}{\SN_{\dparams}}.
    \label{eq:fdiv_lower_bound}
\end{align}
\label{def:vlb}
\end{definition}
Maximizing Equation~\ref{eq:fdiv_lower_bound} provides a neural estimator of $f$-divergence, or \emph{neural divergence}~\citep[][]{huang2018parametric}.
Given the family of neural networks, $\dom{T}_{\Phi} = \{T_{\phi}\}_{\phi \in \Phi}$, is sufficiently expressive, this bound can become arbitrarily tight, and the neural divergence becomes arbitrarily close to the true divergence.
As such, GANs are extremely powerful for training a generator of continuous data, leveraging a dual representation along with a neural network with theoretically unlimited capacity to estimate a difference measure.

For the remainder of this work, we will refer to $\SN_{\dparams} = \nonlin \circ \DD_{\dparams}$ as the \emph{discriminator} and $\DD_{\dparams}$ as the \emph{statistic network} (which is a slight deviation from other works).
We use the general term \emph{GAN} to refer to all models that simultaneously minimize and maximize a \emph{variational lower-bound}, $\dist{\realprob}{\genprob_{\gparams}}{\SN_{\dparams}}$, of a difference measure (such as a divergence or distance).
In principle, this extends to variants of GANs which are based on integral probability metrics~\citep[IPMs, ][]{sriperumbudur2009integral} that leverage a dual representation, such as those that rely on restricting $\dom{\SN}$ through parameteric regularization~\citep{arjovsky2017wasserstein} or by constraining its output distribution~\citep{mroueh2017fisher, mroueh2017mcgan, sutherland2016generative}.

\subsection{Estimation of the target distribution}
Here we will show that, with the variational lower-bound of an $\fd$-divergence along with a family of positive activation functions, $\nonlin: \RR \rightarrow \RR_{+}$, we can estimate the target distribution, $\realprob$, using the generated distribution, $\genprob_{\gparams}$, and the discriminator, $\SN_{\dparams}$.

\begin{theo}
Let $f$ be a convex function and $\SNo \in \dom{\SN}$ a function that satisfies the supremum in Equation~\ref{eq:fdiv} in the non-parametric limit.
Let us assume that $\realprob$ and $\genprob_{\gparams}(x)$ are absolutely continuous w.r.t. a measure $\mu$ and hence admit densities, $\realdens(x)$ and $\gendens_{\gparams}(x)$. 
Then the target density function, $\realdens(x)$, is equal to $(\pd{\fdc}{\SN})(\SN^{\star}(x)) \gendens_{\gparams}(x)$.
\label{th:weighted_density}
\end{theo}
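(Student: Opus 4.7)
The plan is to exploit the fact that the supremum in Equation~\ref{eq:fdiv} is stated in the non-parametric limit, so the family $\dom{\SN}$ is effectively all measurable functions $\dom{X} \to \dom{C}$. Under this assumption, the variational functional
\[
\SN \mapsto \EE_{\realprob}[\SN(x)] - \EE_{\genprob_{\gparams}}[\fdc(\SN(x))]
\]
decouples pointwise in $x$, because the integrals are sums of independent contributions from each $x$ (with weights $\realdens(x)\,d\mu$ and $\gendens_{\gparams}(x)\,d\mu$). So the supremum can be found by maximizing, at each $x$ separately, the scalar function
\[
t \;\mapsto\; \realdens(x)\, t \;-\; \gendens_{\gparams}(x)\, \fdc(t).
\]

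\textbf{Key steps.} First, I would state the pointwise reduction above, noting that it is valid because $\realprob$ and $\genprob_{\gparams}$ are absolutely continuous w.r.t.\ $\mu$, so the value functional is an integral against $\mu$. Second, since $\fd$ is convex and lower semi-continuous, so is $\fdc$; hence $\fdc$ is differentiable almost everywhere on the interior of $\dom{C}$, and the scalar map above is concave in $t$. Setting its derivative to zero gives the first-order optimality condition
\[
\realdens(x) \;=\; \gendens_{\gparams}(x)\, \pdf{\fdc}{\SN}(\SNo(x)),
\]
which is exactly the claim. Third, I would observe that this is consistent with the classical identity $(\fdc)' = (\fd')^{-1}$ arising from Fenchel duality: the optimal statistic is $\SNo(x) = \fd'(\realdens(x)/\gendens_{\gparams}(x))$, so $(\fdc)'(\SNo(x)) = \realdens(x)/\gendens_{\gparams}(x)$, and rearranging yields the density ratio formula in the theorem.

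\textbf{Main obstacle.} The only delicate point is the pointwise maximization argument: one must verify that exchanging ``$\sup$'' with integration is legitimate, i.e.\ that the pointwise supremizer defines a measurable function $\SNo$ lying in $\dom{\SN}$. In the non-parametric limit this is immediate (measurability follows from the implicit function / measurable selection theorem applied to the concave scalar problem), but the statement of the theorem does require this limit, since a finitely parameterized family $\dom{\SN}_{\Phi}$ generally cannot realize the pointwise optimum. I would mention this caveat explicitly and note that differentiability of $\fdc$ at $\SNo(x)$ is guaranteed whenever $\SNo(x)$ lies in the relative interior of $\dom{C}$, which is the regime of interest.
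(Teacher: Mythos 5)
Your proposal is correct and is essentially the paper's own argument: the paper likewise reduces to the pointwise scalar maximization of $t \mapsto t\,\realdens(x)/\gendens_{\gparams}(x) - \fdc(t)$ (via the conjugate representation of $\fd$ inside $\EE_{\genprob_{\gparams}}$) and reads off the first-order condition $\pdf{\fdc}{t}(t) = \realdens(x)/\gendens_{\gparams}(x)$ at $t = \SNo(x)$. Your added care about the sup--integral interchange, measurable selection, and differentiability of $\fdc$ on the interior of $\dom{C}$ is a welcome tightening of details the paper leaves implicit, but it does not change the route.
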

\begin{proof}
Following the definition of the $\fd$-divergence and the convex conjugate, we have:
\begin{align}
    \DV_{\fd}(\realprob || \genprob_{\gparams}) = \EE_{\genprob_{\gparams}}\left[\fd\left(\frac{\realdens(x)}{\gendens(x)}\right)\right] 
    = \EE_{\genprob_{\gparams}}\left[\sup_t \left\{t \frac{\realdens(x)}{\gendens(x)} - \fdc(t)\right\}\right].
\end{align}
As $\fdc$ is convex, there is an absolute maximum when $\pdf{\fdc}{t}(t) = \frac{\realdens(x)}{\gendens_{\gparams}(x)}$. Rephrasing $t$ as a function, $\SN(x)$, and by the definition of $\SNo(x)$, we arrive at the desired result.
\end{proof}

Theorem~\ref{th:weighted_density} indicates that the target density function can be re-written in terms of a generated density function and a scaling factor.
We refer to this scaling factor, $\iwo(x) = (\pd{\fdc}{\SN})(\SN^{\star}(x))$, as the optimal \emph{importance weight} to make the connection to importance sampling~\footnote{
In the case of the $\fd$-divergence used in \citet{goodfellow2014generative}, the optimal importance weight equals $\iwo(x) = e^{\DD^{\star}(x)} = \GN^{\star}(x) / (1 - \GN^{\star}(x))$}.
In general, an optimal discriminator is hard to guarantee in the saddle-point optimization process, so in practice, $\SNp$ will define a lower-bound that is not exactly tight w.r.t. the $\fd$-divergence.
Nonetheless, we can define an estimator for the target density function using a sub-optimal $\SN_{\dparams}$.

\begin{definition}[$\fd$-divergence importance weight estimator]
Let $\fd$ and $\fdc$, and $\SN_{\dparams}(x)$ be defined as in Definitions~\ref{def:fdiv} and \ref{def:vlb} but where $\nonlin: \RR \rightarrow \RR_{+} \subseteq \dom{C}$ is a positive activation function.
Let $\iw(x) = (\pd{\fdc}{\SN})(\SN(x))$ and $\beta = \EE_{\genprob_{\dparams}}[\iw(x)]$ be a partition function.
The $\fd$-divergence importance weight estimator, $\realdensest(x)$ is
\begin{align}
    \realdensest(x) = \frac{w(x)}{\beta} \gendens_{\gparams}(x).
    \label{eq:bgan_global}
\end{align}
\end{definition}

\begin{table}[t]
\caption{Important weights and nonlinearities that ensure }
\centering
    \begin{tabular}{ |p{3cm}||p{3cm}|p{4cm}|}
         \hline
         \multicolumn{3}{|c|}{Importance weights for $\fd$-divergences} \\
         \hline
         $\fd$-divergence   & $\nonlin(y)$                  & $\iw(x) = (\pd{\fdc}{\SN})(\SN(x))$\\
         \hline
         GAN                & $-\log{(1 + e^{-y})}$         & $-\frac{1}{1 - e^{-\SN_{\dparams}}} = e^{\DD_{\dparams}(x)}$\\
          \hline
         Jensen-Shannon &   $ \log{2} -\log{(1 + e^{-y})}$  & $-\frac{1}{2 - e^{-\SN_{\dparams}}} = e^{\DD_{\dparams}(x)}$\\
          \hline
         KL                 & $y + 1$                       & $e^{(\SN_{\dparams}(x) - 1)} = e^{\DD_{\dparams}(x)}$\\
          \hline
         Reverse KL         & $-e^{-y}$                     & $-\frac{1}{\SN_{\dparams}(x)} = e^{\DD_{\dparams}(x)}$\\
          \hline
         Squared-Hellinger  & $1 - e^{-v / 2}$              & $\frac{1}{(1 - \SN_{\dparams}(x))^2} = e^{\DD_{\dparams}(x)}$\\
         \hline
    \end{tabular}
    \label{tb:iw_fdiv}
\end{table}

The non-negativity of $\nonlin$ is important as the densities are positive.
Table~\ref{tb:iw_fdiv} provides a set of $\fd$-divergences (following suggestions of ~\citet{nowozin2016f} with only slight modifications) which are suitable candidates and yield positive importance weights.
Surprisingly, each of these yield the same function over the neural network before the activation function: $\iw(x) = e^{\DD_{\dparams}(x)}$.\footnote{Note also that the normalized weights resemble softmax probabilities}
It should be noted that $\realdensest(x)$ is a potentially biased estimator for the true density; however, the bias only depends on the tightness of the variational lower-bound: the tighter the bound, the lower the bias.
This problem reiterates the problem with all GANs, where proofs of convergence are only provided in the optimal or near-optimal limit~\citep{goodfellow2014generative, nowozin2016f, mao2016least}.

\subsection{Boundary-seeking GANs}
As mentioned above and repeated here, GANs only work \emph{when the value function is completely differentiable w.r.t. the parameters of the generator, $\gparams$}.
The gradients that would otherwise be used to train the generator of discrete variables are zero almost everywhere, so it is impossible to train the generator directly using the value function.
Approximations for the back-propagated signal exist~\citep{bengio2013estimating,gu2015muprop,gumbel1954statistical,jang2016categorical,maddison2016concrete,tucker2017rebar}, but as of this writing, none has been shown to work satisfactorily in training GANs with discrete data. 

Here, we introduce the boundary-seeking GAN as a method for training GANs with discrete data.
We first introduce a policy gradient based on the KL-divergence which uses the importance weights as a reward signal.
We then introduce a lower-variance gradient which defines a unique reward signal for each $z$ and prove this can be used to solve our original problem.

\paragraph{Policy gradient based on importance sampling}
Equation~\ref{eq:bgan_global} offers an option for training a generator in an adversarial way.
If we know the explicit density function, $\gendensp$, (such as a multivariate Bernoulli distribution), then we can, using $\realdensest(x)$ as a target (keeping it fixed w.r.t. optimization of $\gparams$), train the generator using the gradient of the KL-divergence:
\begin{align}
    \nabla_{\gparams} \DV_{KL}(\realdensest(x) || \gendens_{\gparams}) = -\EE_{\genprob_{\gparams}}\left[\frac{\iw(x)}{\beta} \nabla_{\gparams} \log{\gendens_{\gparams}(x)}\right].
    \label{eq:bgan_grad}
\end{align}
Here, the connection to importance sampling is even clearer, and this gradient resembles other importance sampling methods for training generative models in the discrete setting~\citep{bornschein2014reweighted,rubinstein2016simulation}.
However, we expect the variance of this estimator will be high, as it requires estimating the partition function, $\beta$ (for instance, using Monte-Carlo sampling).
We address reducing the variance from estimating the normalized importance weights next.

\paragraph{Lower-variance policy gradient}
Let $\gendensp(x) = \int_{\dom{Z}}{\gencondp{x}{z} \prior(z) dz}$ be a probability density function with a conditional density, $\gencondp{x}{z} : \dom{Z} \rightarrow [0, 1]^d$ (e.g., a multivariate Bernoulli distribution), and prior over $z$, $\prior(z)$.
Let $\alpha(z) = \EE_{\gencondp{x}{z}}[\iw(x)] = \int_{\dom{X}}{\gencondp{x}{z} \iw(x) dx}$ be a partition function over the conditional distribution.
Let us define $\realcondt{x}{z} = \frac{w(x)}{\alpha(z)} \gencondp{x}{z}$ as the (normalized) conditional distribution weighted by $\frac{w(x)}{\alpha(z)}$.
The expected conditional KL-divergence over $\prior(z)$ is:
\begin{align}
\EE_{\prior(z)}[\DV_{KL}\left(\realcondt{x}{z} \middle\| \gencondp{x}{z} \right)] = \int_{\dom{Z}}{\prior(z) \DV_{KL}\left(\realcondt{x}{z} \middle\| \gencondp{x}{z} \right) dz}
\label{eq:bgan_local}
\end{align}

Let $\sam{x}{m} \sim \gencondp{x}{z}$ be samples from the prior and $\iwt(\sam{x}{m}) = \frac{\iw(\sam{x}{m})}{\sum_{m'}{\iw(\sam{x}{m'})}}$ be a Monte-Carlo estimate of the normalized importance weights. 
The gradient of the expected conditional KL-divergence w.r.t. the generator parameters, $\gparams$, becomes:
\begin{align}
    \nabla_{\gparams} \EE_{\prior(z)}[\DV_{KL}\left(\realcondt{x}{z} \middle\| \gencondp{x}{z} \right)] = -\EE_{\prior(z)}\left[\sum_m \iwt(\sam{x}{m}) \nabla_{\gparams} \log{\gencondp{\sam{x}{m}}{z}}\right],
    \label{eq:bgan_alpha}
\end{align}
where we have approximated the expectation using the Monte-Carlo estimate.

Minimizing the expected conditional KL-divergences is stricter than minimizing the KL-divergence in Equation~\ref{eq:bgan_global}, as it requires all of the conditional distributions to match independently.
We show that the KL-divergence of the marginal probabilities is zero when the expectation of the conditional KL-divergence is zero as well as show this estimator works better in practice in the Appendix.

\begin{algorithm}[t]
    \begin{algorithmic}
        \State $(\theta, \phi) \gets \text{initialize the parameters of the generator and statistic network}$
        \Repeat
        \State $\sam{\hat{x}}{n} \sim \realprob$
        \Comment{Draw $N$ samples from the empirical distribution}
        \State $\sam{z}{n} \sim \prior(z)$
        \Comment{Draw $N$ samples from the prior distribution}
        \State $\sam{x}{m|n} \sim \gencondp{x}{\sam{z}{n}}$
        \Comment{Draw $M$ samples from each conditional $\gencondp{x}{\sam{z}{m}}$ (drawn independently if $\realprob$ and $\genprobp$ are multi-variate)}
        \State $\iw(\sam{x}{m|n}) \gets (\pd{\fdc}{\SN}) \circ (\nonlin \circ \DD_{\dparams}(\sam{x}{m|n}))$
        \State $\iwt(\sam{x}{m|n}) \gets \iw(\sam{x}{m|n}) / \sum_{m'} \iw(\sam{x}{m'|n})$
        \Comment{Compute the un-normalized and normalized importance weights (applied uniformly if $\realprob$ and $\genprobp$ are multi-variate)}
        \State $\dist{\realprob}{\genprob_{\gparams}}{\SN_{\dparams}} \gets \frac{1}{N} \sum_n \DD_{\dparams}(\sam{\hat{x}}{n}) - \frac{1}{N} \sum_n \frac{1}{M} \sum_m \iw(\sam{x}{m|n})$
        \Comment{Estimate the variational lower-bound}
        \State $\dparams \gets \dparams + \gamma_d \nabla_{\dparams} \dist{\realprob}{\genprob_{\gparams}}{\SN_{\dparams}}$
        \Comment{Optimize the discriminator parameters}
        \State $\gparams \gets \gparams + \gamma_g \frac{1}{N} \sum_{n,m} \iwt(\sam{x}{m \mid n}) \nabla_{\gparams} \log{\gencondp{\sam{x}{m \mid n}}{z}}$
        \Comment{Optimize the generator parameters}
        \Until{convergence}
    \end{algorithmic}
\caption{\label{alg:bgan}. Discrete Boundary Seeking GANs}
\end{algorithm}

Algorithm~\ref{alg:bgan} describes the training procedure for discrete BGAN.
This algorithm requires an additional $M$ times more computation to compute the normalized importance weights, though these can be computed in parallel exchanging space for time.
When the $\realprob$ and $\genprobp$ are multi-variate (such as with discrete image data), we make the assumption that the observed variables are independent conditioned on $Z$.
The importance weights, $\iw$, are then applied \emph{uniformly} across each of the observed variables.

\paragraph{Connection to policy gradients}
REINFORCE is a common technique for dealing with discrete data in GANs~\citep{che2017maximum, li2017adversarial}.
Equation~\ref{eq:bgan_local} is a policy gradient in the special case that the reward is the normalized importance weights.
This reward approaches the likelihood ratio in the non-parametric limit of an optimal discriminator.
Here, we make another connection to REINFORCE as it is commonly used, with baselines, by deriving the gradient of the reversed KL-divergence.
\begin{definition}[REINFORCE-based BGAN]
Let $\SN_{\dparams}(x)$ be defined as above where\\ $\pd{\fdc}{T}(\SN_{\dparams}(x)) = e^{\DD_{\dparams}(x)}$.
Consider the gradient of the \emph{reversed} KL-divergence:
\begin{align}
    \nabla_{\gparams} \DV_{KL}\left(\gendensp \middle\| \realdensest \right) &= -\EE_{\prior(z)}\left[\sum_m (\log{\iw(\sam{x}{m})} - \log{\beta} + 1) \nabla_{\gparams} \log{\gencondp{\sam{x}{m}}{z}}\right]
    \nonumber\\
    &= -\EE_{\prior(z)}\left[\sum_m (\DD_{\dparams}(x) - b) \nabla_{\gparams} \log{\gencondp{\sam{x}{m}}{z}}\right]
\end{align}
\end{definition}
From this, it is clear that we can consider the output of the statistic network, $\DD_{\dparams}(x)$, to be a \emph{reward} and $b = \log{\beta} = \EE_{\genprobp}[\iw(x)]$ to be the analog of a baseline.\footnote{Note that we have removed the additional constant as $\EE_{\gendensp}[1 * \nabla_{\gparams} \gendensp] = 0$}
This gradient is similar to those used in previous works on discrete GANs, which we discuss in more detail in Section~\ref{sec:rel}.

\subsection{Continuous variables and the stability of GANs}
For continuous variables, minimizing the variational lower-bound suffices as an optimization technique as we have the full benefit of back-propagation to train the generator parameters, $\gparams$.
However, while the convergence of the discriminator is straightforward, to our knowledge there is no general proof of convergence for the generator except in the non-parametric limit or near-optimal case.
What's worse is the value function can be arbitrarily large and negative.
Let us assume that $\max \SN = M < \infty$ is unique. As $\fdc$ is convex, the minimum of the lower-bound over $\gparams$ is:
\begin{align}
    &\inf_{\gparams} \dist{\realprob}{\genprob_{\gparams}}{\GN_{\dparams}} 
    = \inf_{\gparams} \EE_{\realprob}[\SN_{\dparams}(x)] - \EE_{\genprob_{\gparams}}[\fdc(\SN_{\dparams}(x))] 
    \nonumber\\
    &= \EE_{\realprob}[\SN_{\dparams}(x)] - \sup_{\gparams} \EE_{\genprob_{\gparams}}[\fdc(\SN_{\dparams}(x))]
    = \EE_{\realprob}[\SN_{\dparams}(x)] - \fdc(M).
\end{align}
In other words, the generator objective is optimal when the generated distribution, $\QQ_{\gparams}$, is nonzero only for the set $\{x \mid \SN(x) = M \}$.
Even outside this worst-case scenario, the additional consequence of this minimization is that this variational lower-bound can become looser w.r.t. the $\fd$-divergence, with no guarantee that the generator would actually improve.
Generally, this is avoided by training the discriminator in conjunction with the generator, possibly for many steps for every generator update.
However, this clearly remains one source of potential instability in GANs.

Equation~\ref{eq:bgan_global} reveals an alternate objective for the generator that should improve stability.
Notably, we observe that for a given estimator, $\realdensest(x)$, $\gendensp(x)$ matches when $\iw(x) = (\pd{\fdc}{\SN})(\SN(x)) = 1$.
\begin{definition}[Continuous BGAN objective for the generator]
Let $\GG_{\gparams} : \dom{Z} \rightarrow \dom{X}$ be a generator function that takes as input a latent variable drawn from a simple prior, $z \sim \prior(z)$.
Let $\SN_{\dparams}$ and $\iw(x)$ be defined as above.
We define the continuous BGAN objective as: $\hat{\gparams} = \argmin_{\gparams} (\log{\iw(\GG_{\gparams}(z))})^2.$
We chose the $\log$, as with our treatments of $\fd$-divergences in Table~\ref{tb:iw_fdiv}, the objective is just the square of the statistic network output:
\begin{align}
\hat{\gparams} = \argmin_{\gparams} \DD_{\dparams}(\GG_{\gparams}(z))^2.
\end{align}
\end{definition}
This objective can be seen as changing a concave optimization problem (which is poor convergence properties) to a convex one.

\section{Related work and discussion}
\label{sec:rel}
\paragraph{On estimating likelihood ratios from the discriminator}
Our work relies on estimating the likelihood ratio from the discriminator, the theoretical foundation of which we draw from $f$-GAN~\citep{nowozin2016f}.
The connection between the likelihood ratios and the policy gradient is known in previous literature~\citep{jie2010connection}, and the connection between the discriminator output and the likelihood ratio was also made in the context of continuous GANs~\citep{mohamed2016learning, tran2017deep}. 
However, our work is the first to successfully formulate and apply this approach to the discrete setting.

\paragraph{Importance sampling}
Our method is very similar to re-weighted wake-sleep~\citep[RWS, ][]{bornschein2014reweighted}, which is a method for training Helmholtz machines with discrete variables.
RWS also relies on minimizing the KL divergence, the gradients of which also involve a policy gradient over the likelihood ratio.
Neural variational inference and learning~\citep[NVIL,][]{mnih2014neural}, on the other hand, relies on the reverse KL.
These two methods are analogous to our importance sampling and REINFORCE-based BGAN formulations above.

\paragraph{GAN for discrete variables}
Training GANs with discrete data is an active and unsolved area of research, particularly with language model data involving recurrent neural network (RNN) generators~\citep{yu2016seqgan, li2017adversarial}.
Many REINFORCE-based methods have been proposed for language modeling~\citep{yu2016seqgan, li2017adversarial, dai2017towards} which are similar to our REINFORCE-based BGAN formulation and effectively use the sigmoid of the estimated log-likelihood ratio.
The primary focus of these works however is on improving \emph{credit assignment}, and their approaches are compatible with the policy gradients provided in our work.

There have also been some improvements recently on training GANs on language data by rephrasing the problem into a GAN over some continuous space~\citep{lamb2016professor, kim2017adversarially, gulrajani2017improved}.
However, each of these works bypass the difficulty of training GANs with discrete data by rephrasing the deterministic game in terms of continuous latent variables or simply ignoring the discrete sampling process altogether, and do not directly solve the problem of optimizing the generator from a difference measure estimated from the discriminator. 


\paragraph{Remarks on stabilizing adversarial learning, IPMs, and regularization}
A number of variants of GANs have been introduced recently to address stability issues with GANs.
Specifically, generated samples tend to collapse to a set of singular values that resemble the data on neither a per-sample or distribution basis.
Several early attempts in modifying the train procedure~\citep{berthelot2017began, salimans2016improved} as well as the identifying of a taxonomy of working architectures~\citep{radford2015unsupervised} addressed stability in some limited setting, but it wasn't until Wassertstein GANs~\citep[WGAN,][]{arjovsky2017wasserstein} were introduced that there was any significant progress on reliable training of GANs.

WGANs rely on an integral probability metric~\citep[IPM,][]{sriperumbudur2009integral} that is the dual to the \emph{Wasserstein distance}.
Other GANs based on IPMs, such as Fisher GAN~\citep{mroueh2017fisher} tout improved stability in training.
In contrast to GANs based on $\fd$-divergences, besides being based on \emph{metrics} that are ``weak", IPMs rely on restricting $\dom{\SN}$ to a subset of all possible functions.
For instance in WGANs, $\dom{\SN} = \{\SN \mid \|\SN\|_{L} \leq K\}$, is the set of K-Lipschitz functions.
Ensuring a statistic network, $\SNp$, with a large number of parameters is Lipschitz-continuous is \emph{hard}, and these methods rely on some sort of regularization to satisfy the necessary constraints.
This includes the original formulation of WGANs, which relied on weight-clipping, and a later work~\citep{gulrajani2017improved} which used a gradient penalty over interpolations between real and generated data.

Unfortunately, the above works provide little details on whether $\SNp$ is actually in the constrained set in practice, as this is probably very hard to evaluate in the high-dimensional setting.
Recently, \citet{roth2017stabilizing} introduced a gradient norm penalty similar to that in \citet{gulrajani2017improved} without interpolations and which is formulated in terms of $\fd$-divergences.
In our work, we've found that this approach greatly improves stability, and we use it in nearly all of our results.
That said, it is still unclear empirically how the discriminator objective plays a strong role in stabilizing adversarial learning, but at this time it appears that correctly regularizing the discriminator is sufficient.

\section{Discrete variables: experiments and results}

\subsection{Adversarial classification}
We first verify the gradient estimator provided by BGAN works quantitatively in the discrete setting by evaluating its ability to train a classifier with the CIFAR-10 dataset~\citep{krizhevsky2009learning}.
The ``generator" in this setting is a multinomial distribution, $\gencondp{y}{x}$ modeled by the softmax output of a neural network.
The discriminator, $\SNp(x, y)$, takes as input an image / label pair so that the variational lower-bound is:
\begin{align}
    \dist{\realprob_{X Y}}{\genprob_{Y|X} \realprob_X}{\SN_{\dparams}} = \EE_{\realdens(x, y)}[\SN_{\dparams}(x, y)] - \EE_{\gencondp{y}{x}\realdens(x)}[\fdc(\SN_{\dparams}(x, y))]
\end{align}
For these experiments, we used a simple $4$-layer convolutional neural network with an additional $3$ fully-connected layers.
We trained the importance sampling BGAN on the set of $\fd$-divergences given in Table~\ref{tb:iw_fdiv} as well as the REINFORCE counterpart for $200$ epochs and report the accuracy on the test set.
In addition, we ran a simple classification baseline trained on cross-entropy as well as a continuous approximation to the problem as used in WGAN-based approaches~\citep{gulrajani2017improved}.
No regularization other than batch normalization~\citep[BN,][]{ioffe2015batch} was used with the generator, while gradient norm penalty~\citep{roth2017stabilizing} was used on the statistic networks.
For WGAN, we used clipping, and chose the clipping parameter, the number of discriminator updates, and the learning rate separately based on training set performance.
The baseline for the REINFORCE method was learned using a moving average of the reward.

\begin{table}[ht]
    \centering
    \caption{Adversarial classification on CIFAR-10.
    All methods are BGAN with importance sampling (left) or REINFORCE (right) except for the baseline (cross-entropy) and Wasserstein GAN (WGAN)}
    \begin{tabular}{ |l | l||c|c|} 
        \hline
        & Measure            & \multicolumn{2}{c|}{Error(\%)} \\ 
        \hline
        \hline
        & Baseline           & \multicolumn{2}{c|}{26.6}   \\
        \hline
        &WGAN (clipping)    & \multicolumn{2}{c|}{72.3}      \\
        \hline
        \hline
        &                   & IS            & REINFORCE \\
        \multirow{4}{*}{\begin{turn}{90}BGAN\end{turn}} 
        &GAN                & 26.2          & 27.1  \\
        &Jensen-Shannon     & 26.0          & 27.7  \\
        &KL                 & 28.1          & 28.0  \\
        &Reverse KL         & 27.8          & 28.2  \\
        &Squared-Hellinger  & 27.0          & 28.0  \\
        \hline
    \end{tabular}
    \label{tb:class}
\end{table}

Our results are summarized in Table~\ref{tb:class}.
Overall, BGAN performed similarly to the baseline on the test set, with the REINFORCE method performing only slightly worse.
For WGAN, despite our best efforts, we could only achieve an error rate of $72.3\%$ on the test set, and this was after a total of $600$ epochs to train.
Our efforts to train WGAN using gradient penalty failed completely, despite it working with higher-dimension discrete data (see Appendix).

\subsection{Discrete image and natural language generation}
\label{sec:disc_results}
\paragraph{Image data: binary MNIST and quantized CelebA}

We tested BGAN using two imaging benchmarks: the common discretized MNIST dataset~\citep{salakhutdinov2008quantitative} and a new quantized version of the CelebA dataset \citep[see ][for the original CelebA dataset]{liu2015deep}.

For CelebA quantization, we first downsampled the images from $64\times64$ to $32\times32$.
We then generated a $16$-color palette using Pillow, a fork of the Python Imaging Project (\href{https://python-pillow.org}{https://python-pillow.org}).
This palette was then used to quantize the RGB values of the CelebA samples to a one-hot representation of $16$ colors.
Our models used deep convolutional GANs~\citep[DCGAN,][]{radford2015unsupervised}.
The generator is fed a vector of $64$ i.i.d. random variables drawn from a uniform distribution, $[0, 1]$.
The output nonlinearity was sigmoid for MNIST to model the Bernoulli centers for each pixel, while the output was softmax for quantized CelebA.

\begin{figure*}[bt]
    \begin{minipage}{0.32\textwidth}
        \includegraphics[width=0.99\columnwidth]{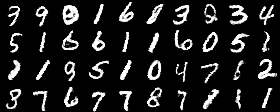}
    \end{minipage}
    \hfill
    \begin{minipage}{0.66\textwidth}
        \caption{
        Left: Random samples from the generator trained as a boundary-seeking GAN (BGAN) with discrete MNIST data.
        Shown are the Bernoulli centers of the generator conditional distribution.}
        \label{fig:rand_gen}
    \end{minipage}

    \begin{minipage}{0.32\textwidth}
        \includegraphics[width=0.99\columnwidth]{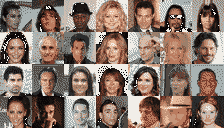}
    \end{minipage}
    \hfill
    \begin{minipage}{0.32\textwidth}
        \includegraphics[width=0.99\columnwidth]{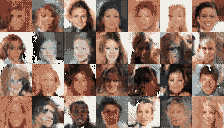}
    \end{minipage}
    \hfill
    \begin{minipage}{0.32\textwidth}
        \caption{
        Left: Ground-truth 16-color (4-bit) quantized CelebA images downsampled to $32\times32$.
        Right: Samples produced from the generator trained as a boundary-seeking GAN on the quantized CelebA for $50$ epochs.}
        \label{fig:celeb_gt}
    \end{minipage}
    \vspace{-3mm}
\end{figure*}

Our results show that training the importance-weighted BGAN on discrete MNIST data is stable and produces realistic and highly variable generated handwritten digits (Figure~\ref{fig:rand_gen}).
Further quantitative experiments comparing BGAN against WGAN with the gradient penalty~\citep[WGAN-GP][]{gulrajani2017improved} showed that when training a new discriminator on the samples directly (keeping the generator fixed), the final estimated distance measures were \emph{higher} (i.e., worse) for WGAN-GP than BGAN, \emph{even when comparing using the Wasserstein distance}.
The complete experiment and results are provided in the Appendix.
For quantized CelebA, the generator trained as a BGAN produced reasonably realistic images which resemble the original dataset well and with good diversity.

\paragraph{1-billion word}
Next, we test BGAN in a natural language setting with the 1-billion word dataset~\citep{chelba2013one}, modeling at the character-level and limiting the dataset to sentences of at least $32$ and truncating to $32$ characters.
For character-level language generation, we follow the architecture of recent work~\citep{gulrajani2017improved}, and use deep convolutional neural networks for both the generator and discriminator.

\begin{table}
\caption{Random samples drawn from a generator trained with the discrete BGAN objective.
The model is able to successfully learn many important character-level English language patterns.}
\small
\begin{tabular}{c | c | c}
And it 's miant a quert could he 
& He weirst placed produces hopesi 
& What 's word your changerg bette\\
" We pait of condels of money wi 
& Sance Jory Chorotic , Sen doesin
& In Lep Edger 's begins of a find", \\
Lankard Avaloma was Mr. Palin , 
& What was like one of the July 2 
& " I stroke like we all call on a\\
Thene says the sounded Sunday in 
& The BBC nothing overton and slea
& With there was a passes ipposing\\
About dose and warthestrinds fro 
& College is out in contesting rev
& And tear he jumped by even a roy
\end{tabular}
\label{tab:lm_res}
\end{table}

Training with BGAN yielded stable, reliably good character-level generation (Table~\ref{tab:lm_res}), though generation is poor compared to recurrent neural network-based methods~\citep{sutskever2011generating,mikolov2012statistical}.
However, we are not aware of any previous work in which a discrete GAN, without any continuous relaxation~\citep{gulrajani2017improved}, was successfully trained from scratch without pretraining and without an auxiliary supervised loss to generate any sensible text. Despite the low quality of the text relative to supervised recurrent language models, the result demonstrates the stability and capability of the proposed boundary-seeking criterion for training discrete GANs.

\section{Continuous variables: experiments and results}
Here we present results for training the generator on the boundary-seeking objective function.
In these experiments, we use the original GAN variational lower-bound from ~\citet{goodfellow2014generative}, only modifying the generator function.
All results use gradient norm regularization~\citep{roth2017stabilizing} to ensure stability.

\subsection{Generation benchmarks}
We test here the ability of continuous BGAN to train on high-dimensional data.
In these experiments, we train on the CelebA, LSUN~\citep{yu15lsun} datasets, and the 2012 ImageNet dataset with all $1000$ labels~\citep{krizhevsky2012imagenet}.
The discriminator and generator were both modeled as $4$-layer Resnets~\citep{he2016deep} without conditioning on labels or attributes.

\begin{figure*}[t]
\begin{minipage}{0.49\textwidth}
\centering
\includegraphics[width=0.99\columnwidth]{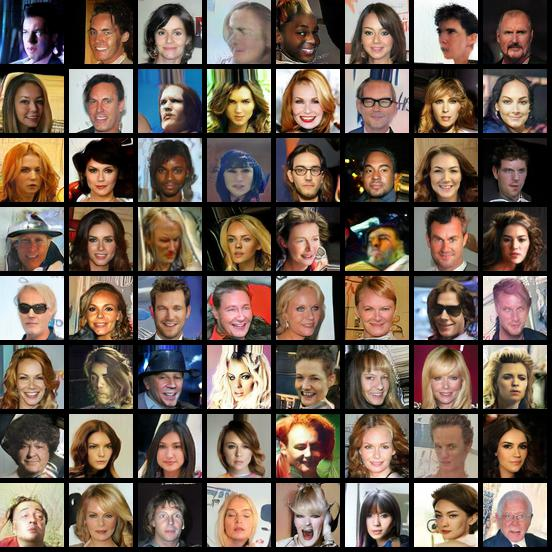}

CelebA
\end{minipage}
\hfill
\begin{minipage}{0.49\textwidth}
\centering
\includegraphics[width=0.99\columnwidth]{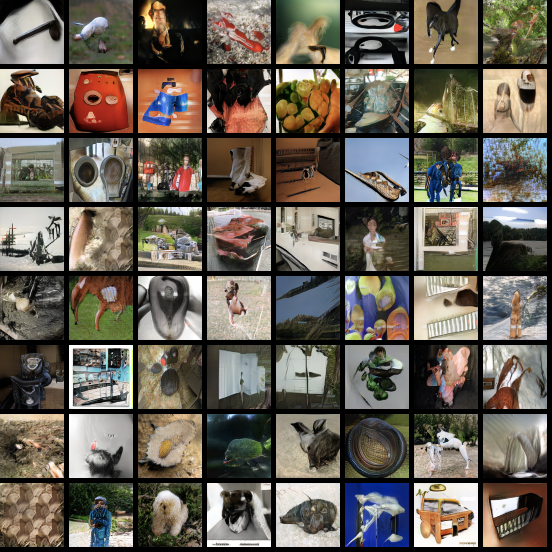}

Imagenet
\end{minipage}

\begin{minipage}{0.49\textwidth}
\centering
\adjincludegraphics[width=0.99\columnwidth,trim={0 0 0 {.5\width}},clip]{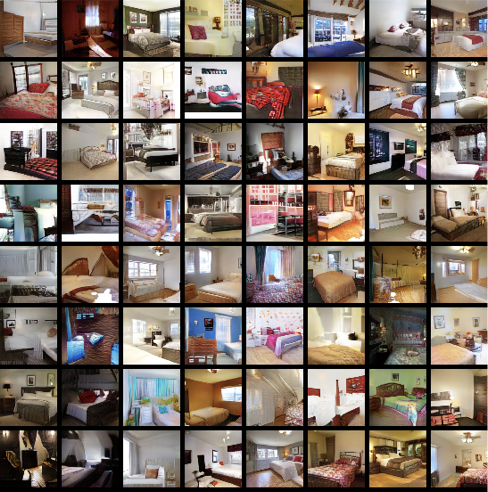}

LSUN
\end{minipage}
\begin{minipage}{0.49\textwidth}
\caption{
Highly realistic samples from a generator trained with BGAN on the CelebA and LSUN datasets.
These models were trained using a deep ResNet architecture with gradient norm regularization~\citep{roth2017stabilizing}.
The Imagenet model was trained on the full $1000$ label dataset without conditioning.
\label{fig:generation}}
\end{minipage}
\end{figure*}

Figure~\ref{fig:generation} shows examples from BGAN trained on these datasets.
Overall, the sample quality is very good.
Notably, our Imagenet model produces samples that are high quality, despite not being trained conditioned on the label and on the full dataset.
However, the story here may not be that BGAN necessarily generates better images than using the variational lower-bound to train the generator, since we found that images of similar quality on CelebA could be attained without the boundary-seeking loss as long as gradient norm regularization was used, rather we confirm that BGAN works well in the high-dimensional setting.

\subsection{Stability of continuous BGAN}
As mentioned above, gradient norm regularization greatly improves stability and allows for training with very large architectures.
However, training still relies on a delicate balance between the generator and discriminator: over-training the generator may destabilize learning and lead to worse results. We find that the BGAN objective is resilient to such over-training.

\paragraph{Stability in training with an overoptimized generator}
To test this, we train on the CIFAR-10 dataset using a simple DCGAN architecture.
We use the original GAN objective for the discriminator, but vary the generator loss as the variational lower-bound, the proxy loss~\citep[i.e., the generator loss function used in][]{goodfellow2014generative}, and the boundary-seeking loss (BGAN).
To better study the effect of these losses, we update the generator for $5$ steps for every discriminator step.

\begin{figure*}[t]
    \begin{minipage}{0.02\textwidth}
    \begin{turn}{90} 
    {\bf 50 epochs}
    \end{turn}
    \end{minipage}
    \begin{minipage}{0.31\textwidth}
        \centering
        {\bf GAN}
        \adjincludegraphics[width=0.99\columnwidth,trim={0 0 {.5\width} {.75\width}},clip]{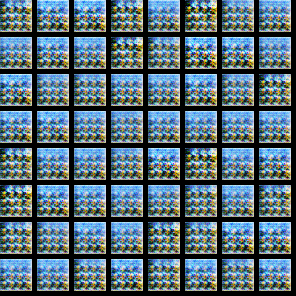}
    \end{minipage}
    \begin{minipage}{0.31\textwidth}
        \centering
        {\bf Proxy GAN}
        \adjincludegraphics[width=0.99\columnwidth,trim={0 0 {.5\width} {.75\width}},clip]{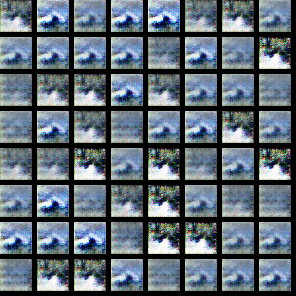}
    \end{minipage}
    \begin{minipage}{0.31\textwidth}
        \centering
        {\bf BGAN}
        \adjincludegraphics[width=0.99\columnwidth,trim={0 0 {.5\width} {.75\width}},clip]{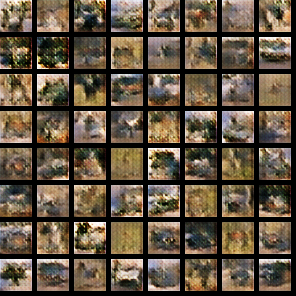}
    \end{minipage}
    
    \begin{minipage}{0.02\textwidth}
    \begin{turn}{90} 
    {\bf 100 epochs}
    \end{turn}
    \end{minipage}
    \begin{minipage}{0.31\textwidth}
        \centering
        \adjincludegraphics[width=0.99\columnwidth,trim={0 0 {.5\width} {.75\width}},clip]{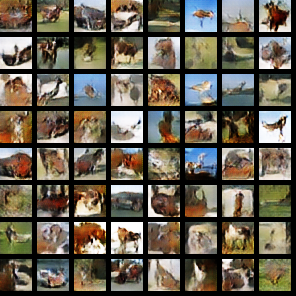}
    \end{minipage}
    \begin{minipage}{0.31\textwidth}
        \centering
        \adjincludegraphics[width=0.99\columnwidth,trim={0 0 {.5\width} {.75\width}},clip]{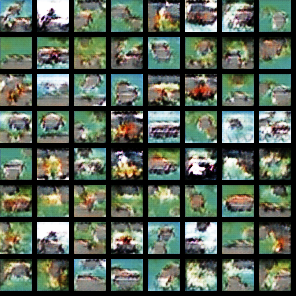}
    \end{minipage}
    \begin{minipage}{0.31\textwidth}
        \centering
        \adjincludegraphics[width=0.99\columnwidth,trim={0 0 {.5\width} {.75\width}},clip]{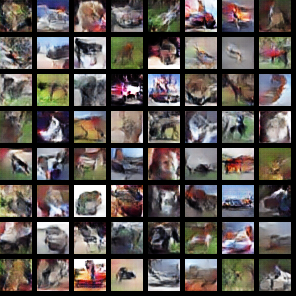}
    \end{minipage}

    \caption{
    Training a GAN with different generator loss functions and $5$ updates for the generator for every update of the discriminator.
    Over-optimizing the generator can lead to instability and poorer results depending on the generator objective function.
    Samples for GAN and GAN with the proxy loss are quite poor at $50$ discriminator epochs ($250$ generator epochs), while BGAN is noticeably better.
    At $100$ epochs, these models have improved, though are still considerably behind BGAN.}
    \label{fig:opgen}
    \vspace{-.2cm}
\end{figure*}

Our results (Figure~\ref{fig:opgen}) show that over-optimizing the generator significantly degrades sample quality.
However, in this difficult setting, BGAN learns to generate reasonable samples in fewer epochs than other objective functions, demonstrating improved stability.

\paragraph{Following the generator gradient}
We further test the different objectives by looking at the effect of gradient descent on the pixels.
In this setting, we train a DCGAN~\citep{radford2015unsupervised} using the proxy loss.
We then optimize the discriminator by training it for another $1000$ updates.
Next, we perform gradient descent directly on the pixels, the original variational lower-bound, the proxy, and the boundary seeking losses separately.

\begin{figure*}[t]
    \begin{minipage}{0.33\textwidth}
        \centering
        {\bf Starting image (generated)}
        \adjincludegraphics[width=0.99\columnwidth,trim={0 0 {.5\width} {.5\width}},clip]{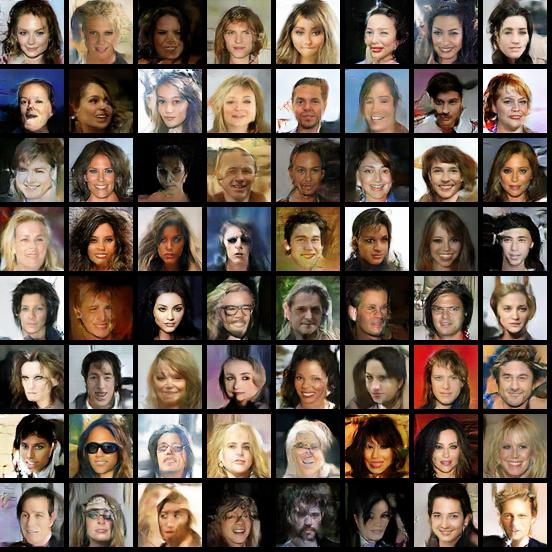}
        \vspace{.3cm}
    \end{minipage}
    \begin{minipage}{0.66\textwidth}
        \centering
        \adjincludegraphics[width=0.99\columnwidth]{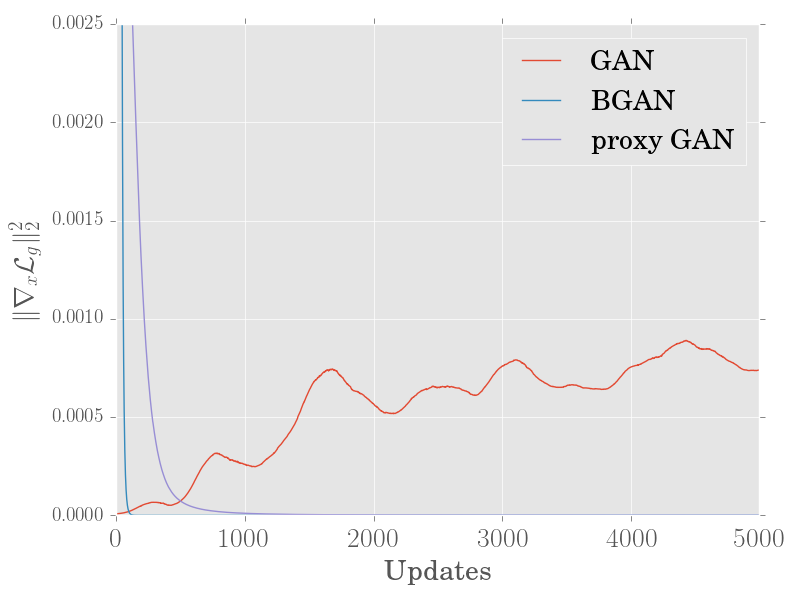}
    \end{minipage}
    
    \begin{minipage}{0.02\textwidth}
    \begin{turn}{90} 
    {\bf 10k updates}
    \end{turn}
    \end{minipage}
    \begin{minipage}{0.31\textwidth}
        \centering
        {\bf GAN}
        \adjincludegraphics[width=0.99\columnwidth,trim={0 0 {.5\width} {.5\width}},clip]{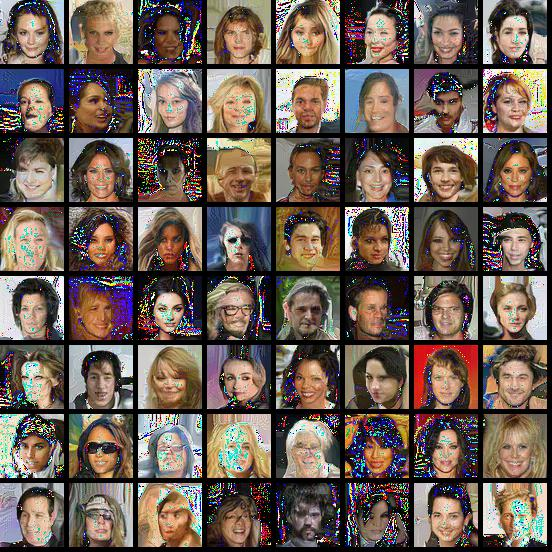}
    \end{minipage}
    \begin{minipage}{0.31\textwidth}
        \centering
        {\bf Proxy GAN}
        \adjincludegraphics[width=0.99\columnwidth,trim={0 0 {.5\width} {.5\width}},clip]{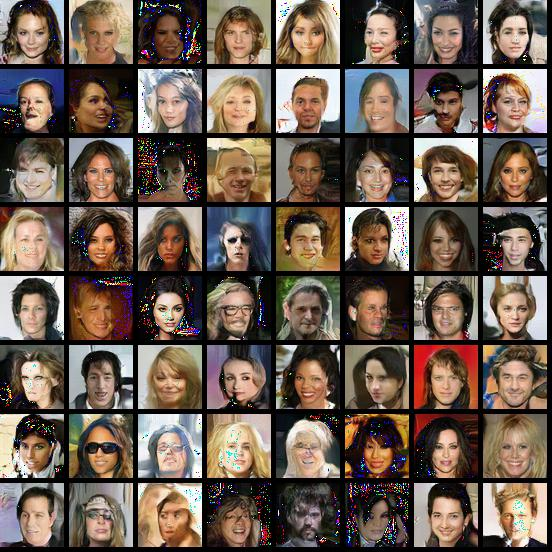}
    \end{minipage}
    \begin{minipage}{0.31\textwidth}
        \centering
        {\bf BGAN}
        \adjincludegraphics[width=0.99\columnwidth,trim={0 0 {.5\width} {.5\width}},clip]{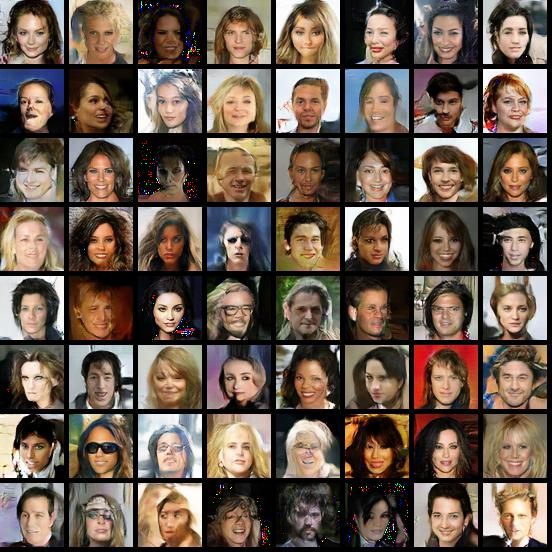}
    \end{minipage}
    
    \begin{minipage}{0.02\textwidth}
    \begin{turn}{90} 
    {\bf 20k updates}
    \end{turn}
    \end{minipage}
    \begin{minipage}{0.31\textwidth}
        \centering
        \adjincludegraphics[width=0.99\columnwidth,trim={0 0 {.5\width} {.5\width}},clip]{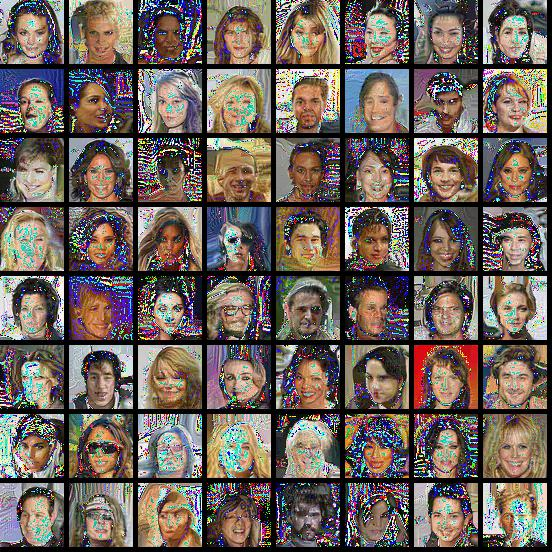}
    \end{minipage}
    \begin{minipage}{0.31\textwidth}
        \centering
        \adjincludegraphics[width=0.99\columnwidth,trim={0 0 {.5\width} {.5\width}},clip]{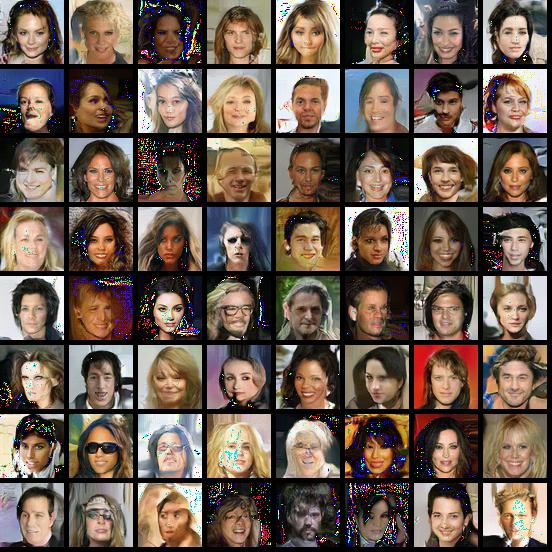}
    \end{minipage}
    \begin{minipage}{0.31\textwidth}
        \centering
        \adjincludegraphics[width=0.99\columnwidth,trim={0 0 {.5\width} {.5\width}},clip]{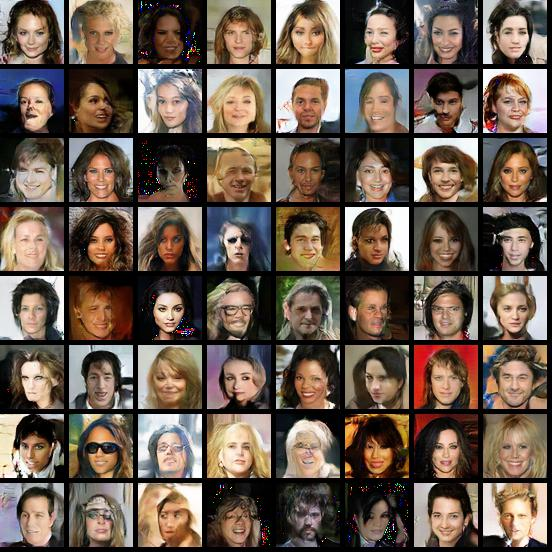}
    \end{minipage}

    \caption{
    Following the generator objective using gradient descent on the pixels.
    BGAN and the proxy have sharp initial gradients that decay to zero quickly, while the variational lower-bound objective gradient slowly increases.
    The variational lower-bound objective leads to very poor images, while the proxy and BGAN objectives are noticeably better.
    Overall, BGAN performs the best in this task, indicating that its objective will not overly disrupt adversarial learning.}
    \label{fig:opgen2}
    \vspace{-.2cm}
\end{figure*}

Our results show that following the BGAN objective at the pixel-level causes the least degradation of image quality.
This indicates that, in training, the BGAN objective is the least likely to disrupt adversarial learning.

\section{Conclusion}

Reinterpreting the generator objective to match the proposal target distribution reveals a novel learning algorithm for training a generative adversarial network~\citep[GANs, ][]{goodfellow2014generative}. 
This proposed approach of boundary-seeking provides us with a unified framework under which learning algorithms for both discrete and continuous variables are derived. Empirically, we verified our approach quantitatively and showed the effectiveness of training a GAN with the proposed learning algorithm, which we call a boundary-seeking GAN (BGAN), on both discrete and continuous variables, as well as demonstrated some properties of stability. 

\section*{Acknowledgements}
RDH thanks IVADO, MILA, UdeM, NIH grants R01EB006841 and P20GM103472, and NSF grant 1539067 for support.
APJ thanks UWaterloo, Waterloo AI lab and MILA for their support and Michael Noukhovitch, Pascal Poupart for constructive discussions.
KC thanks AdeptMind, TenCent, eBay, Google (Faculty Awards 2015, 2016), NVIDIA Corporation (NVAIL) and Facebook for their support.
YB thanks CIFAR, NSERC, IBM, Google, Facebook and Microsoft for their support.
We would like to thank Simon Sebbagh for his input and help with Theorem 2.
Finally, we wish to thank the developers of Theano~\citep{team2016theano}, Lasagne~\url{http://lasagne.readthedocs.io}, and Fuel~\citep{van2015blocks} for their valuable code-base.

\bibliography{main}

\begin{thebibliography}{45}
\providecommand{\natexlab}[1]{#1}
\providecommand{\url}[1]{\texttt{#1}}
\expandafter\ifx\csname urlstyle\endcsname\relax
  \providecommand{\doi}[1]{doi: #1}\else
  \providecommand{\doi}{doi: \begingroup \urlstyle{rm}\Url}\fi

\bibitem[Al-Rfou et~al.(2016)Al-Rfou, Alain, Almahairi, Angermueller, Bahdanau,
  Ballas, Bastien, Bayer, Belikov, et~al.]{team2016theano}
Al-Rfou, Rami, Alain, Guillaume, Almahairi, Amjad, Angermueller, Christof,
  Bahdanau, Dzmitry, Ballas, Nicolas, Bastien, Fr{\'e}d{\'e}ric, Bayer, Justin,
  Belikov, Anatoly, et~al.
\newblock Theano: A python framework for fast computation of mathematical
  expressions.
\newblock \emph{arXiv preprint arXiv:1605.02688}, 2016.

\bibitem[Arjovsky et~al.(2017)Arjovsky, Chintala, and
  Bottou]{arjovsky2017wasserstein}
Arjovsky, Martin, Chintala, Soumith, and Bottou, L{\'e}on.
\newblock Wasserstein gan.
\newblock \emph{arXiv preprint arXiv:1701.07875}, 2017.

\bibitem[Bengio et~al.(2013)Bengio, L{\'e}onard, and
  Courville]{bengio2013estimating}
Bengio, Yoshua, L{\'e}onard, Nicholas, and Courville, Aaron.
\newblock Estimating or propagating gradients through stochastic neurons for
  conditional computation.
\newblock \emph{arXiv preprint arXiv:1308.3432}, 2013.

\bibitem[Berthelot et~al.(2017)Berthelot, Schumm, and Metz]{berthelot2017began}
Berthelot, David, Schumm, Tom, and Metz, Luke.
\newblock Began: Boundary equilibrium generative adversarial networks.
\newblock \emph{arXiv preprint arXiv:1703.10717}, 2017.

\bibitem[Bornschein \& Bengio(2014)Bornschein and
  Bengio]{bornschein2014reweighted}
Bornschein, J{\"o}rg and Bengio, Yoshua.
\newblock Reweighted wake-sleep.
\newblock \emph{arXiv preprint arXiv:1406.2751}, 2014.

\bibitem[Che et~al.(2017)Che, Li, Zhang, Hjelm, Li, Song, and
  Bengio]{che2017maximum}
Che, Tong, Li, Yanran, Zhang, Ruixiang, Hjelm, R~Devon, Li, Weijie, Song,
  Yangqiu, and Bengio, Yoshua.
\newblock Maximum-likelihood augmented discrete generative adversarial
  networks.
\newblock \emph{arXiv preprint}, 2017.

\bibitem[Chelba et~al.(2013)Chelba, Mikolov, Schuster, Ge, Brants, Koehn, and
  Robinson]{chelba2013one}
Chelba, Ciprian, Mikolov, Tomas, Schuster, Mike, Ge, Qi, Brants, Thorsten,
  Koehn, Phillipp, and Robinson, Tony.
\newblock One billion word benchmark for measuring progress in statistical
  language modeling.
\newblock \emph{arXiv preprint arXiv:1312.3005}, 2013.

\bibitem[Dai et~al.(2017)Dai, Lin, Urtasun, and Fidler]{dai2017towards}
Dai, Bo, Lin, Dahua, Urtasun, Raquel, and Fidler, Sanja.
\newblock Towards diverse and natural image descriptions via a conditional gan.
\newblock \emph{arXiv preprint arXiv:1703.06029}, 2017.

\bibitem[Goodfellow et~al.(2014)Goodfellow, Pouget-Abadie, Mirza, Xu,
  Warde-Farley, Ozair, Courville, and Bengio]{goodfellow2014generative}
Goodfellow, Ian, Pouget-Abadie, Jean, Mirza, Mehdi, Xu, Bing, Warde-Farley,
  David, Ozair, Sherjil, Courville, Aaron, and Bengio, Yoshua.
\newblock Generative adversarial nets.
\newblock In \emph{Advances in Neural Information Processing Systems}, pp.\
  2672--2680, 2014.

\bibitem[Gu et~al.(2015)Gu, Levine, Sutskever, and Mnih]{gu2015muprop}
Gu, Shixiang, Levine, Sergey, Sutskever, Ilya, and Mnih, Andriy.
\newblock Muprop: Unbiased backpropagation for stochastic neural networks.
\newblock \emph{arXiv preprint arXiv:1511.05176}, 2015.

\bibitem[Gulrajani et~al.(2017)Gulrajani, Ahmed, Arjovsky, Dumoulin, and
  Courville]{gulrajani2017improved}
Gulrajani, Ishaan, Ahmed, Faruk, Arjovsky, Martin, Dumoulin, Vincent, and
  Courville, Aaron.
\newblock Improved training of wasserstein gans.
\newblock \emph{arXiv preprint arXiv:1704.00028}, 2017.

\bibitem[Gumbel \& Lieblein(1954)Gumbel and Lieblein]{gumbel1954statistical}
Gumbel, Emil~Julius and Lieblein, Julius.
\newblock Statistical theory of extreme values and some practical applications:
  a series of lectures.
\newblock \emph{US Govt. Print. Office}, 1954.

\bibitem[He et~al.(2016)He, Zhang, Ren, and Sun]{he2016deep}
He, Kaiming, Zhang, Xiangyu, Ren, Shaoqing, and Sun, Jian.
\newblock Deep residual learning for image recognition.
\newblock In \emph{Proceedings of the IEEE conference on computer vision and
  pattern recognition}, pp.\  770--778, 2016.

\bibitem[Huang et~al.(2018)Huang, Berard, Touati, Gidel, Vincent, and
  Lacoste-Julien]{huang2018parametric}
Huang, Gabriel, Berard, Hugo, Touati, Ahmed, Gidel, Gauthier, Vincent, Pascal,
  and Lacoste-Julien, Simon.
\newblock Parametric adversarial divergences are good task losses for
  generative modeling.
\newblock \emph{arXiv preprint arXiv:1708.02511}, 2018.

\bibitem[Ioffe \& Szegedy(2015)Ioffe and Szegedy]{ioffe2015batch}
Ioffe, Sergey and Szegedy, Christian.
\newblock Batch normalization: Accelerating deep network training by reducing
  internal covariate shift.
\newblock \emph{arXiv preprint arXiv:1502.03167}, 2015.

\bibitem[Jang et~al.(2016)Jang, Gu, and Poole]{jang2016categorical}
Jang, Eric, Gu, Shixiang, and Poole, Ben.
\newblock Categorical reparameterization with gumbel-softmax.
\newblock \emph{arXiv preprint arXiv:1611.01144}, 2016.

\bibitem[Jie \& Abbeel(2010)Jie and Abbeel]{jie2010connection}
Jie, Tang and Abbeel, Pieter.
\newblock On a connection between importance sampling and the likelihood ratio
  policy gradient.
\newblock In \emph{Advances in Neural Information Processing Systems}, pp.\
  1000--1008, 2010.

\bibitem[Kim et~al.(2017)Kim, Zhang, Rush, LeCun, et~al.]{kim2017adversarially}
Kim, Yoon, Zhang, Kelly, Rush, Alexander~M, LeCun, Yann, et~al.
\newblock Adversarially regularized autoencoders for generating discrete
  structures.
\newblock \emph{arXiv preprint arXiv:1706.04223}, 2017.

\bibitem[Krizhevsky \& Hinton(2009)Krizhevsky and
  Hinton]{krizhevsky2009learning}
Krizhevsky, Alex and Hinton, Geoffrey.
\newblock Learning multiple layers of features from tiny images.
\newblock \emph{Citeseer}, 2009.

\bibitem[Krizhevsky et~al.(2012)Krizhevsky, Sutskever, and
  Hinton]{krizhevsky2012imagenet}
Krizhevsky, Alex, Sutskever, Ilya, and Hinton, Geoffrey~E.
\newblock Imagenet classification with deep convolutional neural networks.
\newblock In \emph{Advances in neural information processing systems}, pp.\
  1097--1105, 2012.

\bibitem[Lamb et~al.(2016)Lamb, GOYAL, Zhang, Zhang, Courville, and
  Bengio]{lamb2016professor}
Lamb, Alex~M, GOYAL, Anirudh Goyal ALIAS~PARTH, Zhang, Ying, Zhang, Saizheng,
  Courville, Aaron~C, and Bengio, Yoshua.
\newblock Professor forcing: A new algorithm for training recurrent networks.
\newblock In \emph{Advances In Neural Information Processing Systems}, pp.\
  4601--4609, 2016.

\bibitem[Li et~al.(2017)Li, Monroe, Shi, Ritter, and
  Jurafsky]{li2017adversarial}
Li, Jiwei, Monroe, Will, Shi, Tianlin, Ritter, Alan, and Jurafsky, Dan.
\newblock Adversarial learning for neural dialogue generation.
\newblock \emph{arXiv preprint arXiv:1701.06547}, 2017.

\bibitem[Liu et~al.(2015)Liu, Luo, Wang, and Tang]{liu2015deep}
Liu, Ziwei, Luo, Ping, Wang, Xiaogang, and Tang, Xiaoou.
\newblock Deep learning face attributes in the wild.
\newblock In \emph{Proceedings of the IEEE International Conference on Computer
  Vision}, pp.\  3730--3738, 2015.

\bibitem[Maddison et~al.(2016)Maddison, Mnih, and Teh]{maddison2016concrete}
Maddison, Chris~J, Mnih, Andriy, and Teh, Yee~Whye.
\newblock The concrete distribution: A continuous relaxation of discrete random
  variables.
\newblock \emph{arXiv preprint arXiv:1611.00712}, 2016.

\bibitem[Mao et~al.(2016)Mao, Li, Xie, Lau, Wang, and Smolley]{mao2016least}
Mao, Xudong, Li, Qing, Xie, Haoran, Lau, Raymond~YK, Wang, Zhen, and Smolley,
  Stephen~Paul.
\newblock Least squares generative adversarial networks.
\newblock \emph{arXiv preprint ArXiv:1611.04076}, 2016.

\bibitem[Mikolov(2012)]{mikolov2012statistical}
Mikolov, Tom{\'a}{\v{s}}.
\newblock \emph{Statistical Language Models Based on Neural Networks}.
\newblock PhD thesis, Ph. D. thesis, Brno University of Technology, 2012.

\bibitem[Mnih \& Gregor(2014)Mnih and Gregor]{mnih2014neural}
Mnih, Andriy and Gregor, Karol.
\newblock Neural variational inference and learning in belief networks.
\newblock In \emph{Proceedings of the 31st International Conference on Machine
  Learning (ICML-14)}, pp.\  1791--1799, 2014.

\bibitem[Mohamed \& Lakshminarayanan(2016)Mohamed and
  Lakshminarayanan]{mohamed2016learning}
Mohamed, Shakir and Lakshminarayanan, Balaji.
\newblock Learning in implicit generative models.
\newblock \emph{arXiv preprint arXiv:1610.03483}, 2016.

\bibitem[Mroueh \& Sercu(2017)Mroueh and Sercu]{mroueh2017fisher}
Mroueh, Youssef and Sercu, Tom.
\newblock Fisher gan.
\newblock \emph{arXiv preprint arXiv:1705.09675}, 2017.

\bibitem[Mroueh et~al.(2017)Mroueh, Sercu, and Goel]{mroueh2017mcgan}
Mroueh, Youssef, Sercu, Tom, and Goel, Vaibhava.
\newblock Mcgan: Mean and covariance feature matching gan.
\newblock \emph{arXiv preprint arXiv:1702.08398}, 2017.

\bibitem[Nguyen et~al.(2010)Nguyen, Wainwright, and
  Jordan]{nguyen2010estimating}
Nguyen, XuanLong, Wainwright, Martin~J, and Jordan, Michael~I.
\newblock Estimating divergence functionals and the likelihood ratio by convex
  risk minimization.
\newblock \emph{IEEE Transactions on Information Theory}, 56\penalty0
  (11):\penalty0 5847--5861, 2010.

\bibitem[Nowozin et~al.(2016)Nowozin, Cseke, and Tomioka]{nowozin2016f}
Nowozin, Sebastian, Cseke, Botond, and Tomioka, Ryota.
\newblock f-gan: Training generative neural samplers using variational
  divergence minimization.
\newblock In \emph{Advances in Neural Information Processing Systems}, pp.\
  271--279, 2016.

\bibitem[Radford et~al.(2015)Radford, Metz, and
  Chintala]{radford2015unsupervised}
Radford, Alec, Metz, Luke, and Chintala, Soumith.
\newblock Unsupervised representation learning with deep convolutional
  generative adversarial networks.
\newblock \emph{arXiv preprint arXiv:1511.06434}, 2015.

\bibitem[Roth et~al.(2017)Roth, Lucchi, Nowozin, and
  Hofmann]{roth2017stabilizing}
Roth, Kevin, Lucchi, Aurelien, Nowozin, Sebastian, and Hofmann, Thomas.
\newblock Stabilizing training of generative adversarial networks through
  regularization.
\newblock \emph{arXiv preprint arXiv:1705.09367}, 2017.

\bibitem[Rubinstein \& Kroese(2016)Rubinstein and
  Kroese]{rubinstein2016simulation}
Rubinstein, Reuven~Y and Kroese, Dirk~P.
\newblock \emph{Simulation and the Monte Carlo method}, volume~10.
\newblock John Wiley \& Sons, 2016.

\bibitem[Salakhutdinov \& Murray(2008)Salakhutdinov and
  Murray]{salakhutdinov2008quantitative}
Salakhutdinov, Ruslan and Murray, Iain.
\newblock On the quantitative analysis of deep belief networks.
\newblock In \emph{Proceedings of the 25th international conference on Machine
  learning}, pp.\  872--879. ACM, 2008.

\bibitem[Salimans et~al.(2016)Salimans, Goodfellow, Zaremba, Cheung, Radford,
  and Chen]{salimans2016improved}
Salimans, Tim, Goodfellow, Ian, Zaremba, Wojciech, Cheung, Vicki, Radford,
  Alec, and Chen, Xi.
\newblock Improved techniques for training gans.
\newblock In \emph{Advances in Neural Information Processing Systems}, pp.\
  2234--2242, 2016.

\bibitem[Sriperumbudur et~al.(2009)Sriperumbudur, Fukumizu, Gretton,
  Sch{\"o}lkopf, and Lanckriet]{sriperumbudur2009integral}
Sriperumbudur, Bharath~K, Fukumizu, Kenji, Gretton, Arthur, Sch{\"o}lkopf,
  Bernhard, and Lanckriet, Gert~RG.
\newblock On integral probability metrics,$\backslash$phi-divergences and
  binary classification.
\newblock \emph{arXiv preprint arXiv:0901.2698}, 2009.

\bibitem[Sutherland et~al.(2016)Sutherland, Tung, Strathmann, De, Ramdas,
  Smola, and Gretton]{sutherland2016generative}
Sutherland, Dougal~J, Tung, Hsiao-Yu, Strathmann, Heiko, De, Soumyajit, Ramdas,
  Aaditya, Smola, Alex, and Gretton, Arthur.
\newblock Generative models and model criticism via optimized maximum mean
  discrepancy.
\newblock \emph{arXiv preprint arXiv:1611.04488}, 2016.

\bibitem[Sutskever et~al.(2011)Sutskever, Martens, and
  Hinton]{sutskever2011generating}
Sutskever, Ilya, Martens, James, and Hinton, Geoffrey~E.
\newblock Generating text with recurrent neural networks.
\newblock In \emph{Proceedings of the 28th International Conference on Machine
  Learning (ICML-11)}, pp.\  1017--1024, 2011.

\bibitem[Tran et~al.(2017)Tran, Ranganath, and Blei]{tran2017deep}
Tran, Dustin, Ranganath, Rajesh, and Blei, David~M.
\newblock Deep and hierarchical implicit models.
\newblock \emph{arXiv preprint arXiv:1702.08896}, 2017.

\bibitem[Tucker et~al.(2017)Tucker, Mnih, Maddison, and
  Sohl-Dickstein]{tucker2017rebar}
Tucker, George, Mnih, Andriy, Maddison, Chris~J, and Sohl-Dickstein, Jascha.
\newblock Rebar: Low-variance, unbiased gradient estimates for discrete latent
  variable models.
\newblock \emph{arXiv preprint arXiv:1703.07370}, 2017.

\bibitem[Van~Merri{\"e}nboer et~al.(2015)Van~Merri{\"e}nboer, Bahdanau,
  Dumoulin, Serdyuk, Warde-Farley, Chorowski, and Bengio]{van2015blocks}
Van~Merri{\"e}nboer, Bart, Bahdanau, Dzmitry, Dumoulin, Vincent, Serdyuk,
  Dmitriy, Warde-Farley, David, Chorowski, Jan, and Bengio, Yoshua.
\newblock Blocks and fuel: Frameworks for deep learning.
\newblock \emph{arXiv preprint arXiv:1506.00619}, 2015.

\bibitem[Yu et~al.(2015)Yu, Zhang, Song, Seff, and Xiao]{yu15lsun}
Yu, Fisher, Zhang, Yinda, Song, Shuran, Seff, Ari, and Xiao, Jianxiong.
\newblock Lsun: Construction of a large-scale image dataset using deep learning
  with humans in the loop.
\newblock \emph{arXiv preprint arXiv:1506.03365}, 2015.

\bibitem[Yu et~al.(2016)Yu, Zhang, Wang, and Yu]{yu2016seqgan}
Yu, Lantao, Zhang, Weinan, Wang, Jun, and Yu, Yong.
\newblock Seqgan: sequence generative adversarial nets with policy gradient.
\newblock \emph{arXiv preprint arXiv:1609.05473}, 2016.

\end{thebibliography}
\bibliographystyle{icml2017}

\section{Appendix}
\subsection{Comparison of discrete methods}
In these experiments, we produce some quantitative measures for BGAN against WGAN with the gradient penalty~\citep[WGAN-GP,][]{gulrajani2017improved} on the discrete MNIST dataset.
In order to use back-propagation to train the generator, WGAN-GP uses the \emph{softmax probabilities} directly, bypassing the sampling process at pixel-level and problems associated with estimating gradients through discrete processes.
Despite this, WGAN-GP is been able to produce samples that visually resemble the target dataset.

Here, we train $3$ models on the discrete MNIST dataset using identical architectures with the BGAN with the JS and reverse KL $\fd$-divergences and WGAN-GP objectives.
Each model was trained for $300$ generator epochs, with the discriminator being updated $5$ times per generator update for WGAN-GP and $1$ time per generator update for the BGAN models (in other words, the generators were trained for the same number of updates).
This model selection procedure was chosen as the difference measure (i.e., JSD, reverse KL divergence, and Wasserstein distance) as estimated during training converged for each model.
WGAN-GP was trained with a gradient penalty hyper-parameter of $5.0$, which did not differ from the suggested $10.0$ in our experiments with discrete MNIST.
The BGAN models were trained with the gradient norm penalty of $5.0$~\citep{roth2017stabilizing}.

Next, for each model, we trained $3$ new discriminators with double capacity (twice as many hidden units on each layer) to maximize the the JS and reverse KL divergences and Wasserstein distance, keeping the generators fixed.
These discriminators were trained for $200$ epochs (chosen from convergence) with the same gradient-based regularizations as above.
For all of these models, the discriminators were trained using the \emph{samples}, as they would be used in practical applications.
For comparison, we also trained an additional discriminator, evaluating the WGAN-GP model above on the Wasserstein distance using the softmax probabilities.

\begin{table}[ht]
    \centering
    \caption{Estimated Jensen-Shannon and KL-divergences and Wasserstein distance by a discriminator trained to maximize the respective lowerbound (lower is better).
    Numbers are estimates averaged ovwe $12$ batches of $5000$ samples with standard devations provided in parentheses.
    All discriminators were trained using samples drawn from the softmax probabilities, with exception to an additional discriminator used to evaluate WGAN-GP where the softmax probabilities were used directly.
    In general, BGAN out-performs WGAN-GP even when comparing the Wasserstein distances.}
    \begin{tabular}{|l|c|c|c|} 
        \hline
        Train Measure   & \multicolumn{3}{c|}{Eval Measure (lower is better)} \\
        \hline
        & JS    & reverse KL    & Wasserstein \\ 
        \hline
        \hline
        BGAN - JS           & $0.37$ ($\pm 0.02$)   & $0.16$ ($\pm 0.01$)              & $0.40$ ($\pm 0.03$)  \\
        BGAN - reverse KL   & $0.44$ ($\pm 0.02$)  & $0.44$ ($\pm 0.03$)          & $0.45$ ($\pm 0.04$)  \\
        WGAN-GP (samples)   & $0.45$ ($\pm 0.03$)  & $1.32$ ($\pm 0.06$)          & $0.87$ ($\pm 0.18$)  \\
        WGAN-GP (softmax)   &-      &  -            & $0.54$ ($\pm 0.12$)  \\
        \hline
    \end{tabular}
    \label{tb:disc}
\end{table}

Final evaluation was done by estimating difference measures using $60000$ MNIST training examples againt $60000$ samples from each generator, averaged over $12$ batches of $5000$.
We used the training set as this is the distribution over which the discriminators were trained.
Test set estimates in general were close and did not diverge from training set distances, indicating the discriminators were not overfitting, but training set estimates were slightly higher on average.

Our results show that the estimates from the sampling distribution from BGAN is consistently lower than that from WGAN-GP, \emph{even when evaluating using the Wasserstein distance}.
However, when training the discriminator on the softmax probabilities, WGAN-GP has a much lower Wasserstein distance.
Despite quantitative differences, samples from these different models were indistinguishable as far as quality by visual inspection.
This indicates that, though playing the adversarial game using the softmax outputs can generate realistic-looking samples, this procedure ultimately hurts the generator's ability to model a truly discrete distribution.

\subsection{Theoretical and empirical validation of the variance reduction method}
Here we validate the policy gradient provided in Equation~\ref{eq:bgan_alpha} theoretically and empirically.
\begin{theo}
Let the expectation of the conditional KL-divergence be defined as in Equation~\ref{eq:bgan_local}. 
Then $\EE_{\prior(z)}[\DV_{KL}\left(\realcondt{x}{z} \middle\| \gencondp{x}{z} \right)] = 0 \implies \DV_{KL}(\realdensest(x) || \gendens_{\gparams}) = 0$.
\end{theo}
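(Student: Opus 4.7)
The plan is to lift the hypothesis to a joint KL divergence and then appeal to data processing. First I would form the joint densities $\tilde{p}(x,z) := \realcondt{x}{z}\prior(z)$ and $g_{\gparams}(x,z) := \gencondp{x}{z}\prior(z)$. Because the common $\prior(z)$ factor cancels inside the log-ratio, a one-line chain-rule computation gives
\[
\EE_{\prior(z)}\!\left[\DV_{KL}\!\left(\realcondt{x}{z}\,\middle\|\,\gencondp{x}{z}\right)\right] \;=\; \DV_{KL}\!\left(\tilde{p}(x,z)\,\middle\|\,g_{\gparams}(x,z)\right),
\]
so the hypothesis is exactly the statement that this joint KL is zero.

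Next I would apply the data-processing inequality (equivalently, joint convexity of $\DV_{KL}$ in its two arguments): marginalizing out $z$ is non-expansive on KL, so
\[
\DV_{KL}\!\left(\textstyle\int \prior(z)\realcondt{x}{z}\,dz \,\middle\|\, \gendens_{\gparams}(x)\right) \;\le\; \DV_{KL}\!\left(\tilde{p}(x,z)\,\middle\|\,g_{\gparams}(x,z)\right) \;=\; 0.
\]
By non-negativity of KL, the left-hand side must vanish as well.

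The final step---and the main obstacle---is to identify the integrated marginal $\int\prior(z)\realcondt{x}{z}\,dz$ with the estimator $\realdensest(x) = \iw(x)\gendens_{\gparams}(x)/\wnorm$ used in the conclusion. Because each conditional KL in the hypothesis is non-negative and their $\prior$-average is zero, $\realcondt{x}{z}=\gencondp{x}{z}$ for $\prior$-a.e.~$z$. Substituting the definition $\realcondt{x}{z} = \iw(x)\gencondp{x}{z}/\wnormz$ then forces $\iw(x) = \wnormz$ on the support of $\gencondp{x}{z}$. Plugging this back into $\int\prior(z)\realcondt{x}{z}\,dz$ collapses the $\iw(x)/\wnormz$ factor, and combining with $\wnorm = \EE_{\prior(z)}[\wnormz]$ identifies the integrated marginal with $\realdensest(x)$ up to sets of measure zero.

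I expect this identification step to be the trickiest part of the proof because it implicitly depends on how the conditional supports $\mathrm{supp}\,\gencondp{\cdot}{z}$ overlap as $z$ varies; the preceding data-processing chain is just the structural, easy ingredient. If that identification proves awkward, an alternative route is a direct Jensen-style computation: writing $\DV_{KL}(\realdensest\,\|\,\gendens_{\gparams})$ as $\wnorm^{-1}\EE_{\gendens_{\gparams}}[\iw\log\iw]-\log\wnorm$, decomposing the outer expectation by conditioning on $z$, and invoking the equality case of Jensen's inequality (applied to the convex function $t\log t$) to conclude from the hypothesis that the bound is tight.
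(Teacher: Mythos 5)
Your proposal reaches the same conclusion and, once its scaffolding is stripped away, rests on exactly the same key step as the paper's proof; the difference is the detour through a joint KL and the data-processing inequality, which turns out not to buy anything here. The chain-rule identity $\EE_{\prior(z)}[\DV_{KL}(\realcondt{x}{z}\,\|\,\gencondp{x}{z})] = \DV_{KL}(\realcondt{x}{z}\prior(z)\,\|\,\gencondp{x}{z}\prior(z))$ is correct, but the $z$-marginal of $\realcondt{x}{z}\prior(z)$ is $\iw(x)\int \frac{\prior(z)}{\wnormz}\gencondp{x}{z}\,dz$, which is \emph{not} $\realdensest(x)=\frac{\iw(x)}{\wnorm}\gendens_{\gparams}(x)$ in general (the $z$-dependent normalizer $\wnormz$ cannot be pulled out of the integral), so data processing alone never bounds the KL you actually need. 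You recognize this and fall back on the pointwise argument: non-negativity of each conditional KL forces $\realcondt{x}{z}=\gencondp{x}{z}$ for $\prior$-a.e.\ $z$, hence $\iw(x)=\wnormz$ on each conditional support, hence $\iw$ is constant, $\wnorm$ equals that constant, and $\realdensest=\gendens_{\gparams}$. That is precisely the paper's proof, so the data-processing preamble is dispensable rather than an alternative route. Your instinct that the identification is the delicate point is well placed: both your argument and the paper's silently assume the conditional supports overlap enough to propagate the constancy of $\iw$ globally. If the sets $\mathrm{supp}\,\gencondp{\cdot}{z}$ were pairwise disjoint, $\iw$ could equal a different constant $\wnormz$ on each of them while every conditional KL vanishes, and then $\iw/\wnorm\neq 1$ and the conclusion fails; the paper asserts ``$\alpha(z)=\iw(x)=C$ is a constant'' without addressing this, so the obstacle you flag is a real gap shared with the published proof, not a defect unique to your approach.
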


\begin{proof}
As the conditional KL-divergence is has an absolute minimum at zero, the expectation can only be zero when the all of the conditional KL-divergences are zero.
In other words:
\begin{align}
    \EE_{\prior(z)}[\DV_{KL}\left(\realcondt{x}{z} \middle\| \gencondp{x}{z} \right)] = 0 \implies \realcondt{x}{z} = \gencondp{x}{z}.
\end{align}
As per the definition of $\realcondt{x}{z}$, this implies that $\alpha(z) = \iw(x) = C$ is a constant.
If $\iw(x)$ is a constant, then the partition function $\beta = C \EE_{\genprobp}[1] = C$ is a constant.
Finally, when $\frac{\iw(x)}{\beta} = 1$, $\realdensest(x) = \gendens_{\gparams} \implies \DV_{KL}(\realdensest(x) || \gendens_{\gparams}) = 0$.
\end{proof}

In order to empirically evaluate the effect of using an Monte-Carlo estimate of $\beta$ from Equation~\ref{eq:bgan_grad} versus the variance-reducing method in Equation~\ref{eq:bgan_alpha}, we trained several models using various sample sizes from the prior, $\prior(z)$, and the conditional, $\gencondp{x}{z}$.

We compare both methods with $64$ samples from the prior and $5$, $10$, and $100$ samples from the conditional. 
In addition, we compare to a model that estimates $\beta$ using $640$ samples from the prior and a single sample from the conditional.
These models were all run on discrete MNIST for $50$ epochs with the same architecture as those from Section~\ref{sec:disc_results} with a gradient penalty of $1.0$, which was the minimum needed to ensure stability in nearly all the models.

Our results (Figure~\ref{fig:variance}) show a clear improvement using the variance-reducing method from Equation~\ref{eq:bgan_alpha} over estimating $\beta$.
Wall-clock times were nearly identical for methods using the same number of total samples (blue, green, and red dashed and solid line pairs).
Both methods improve as the number of conditional samples is increased.

\begin{figure*}[t]
    \centering
    \includegraphics[width=0.99\columnwidth]{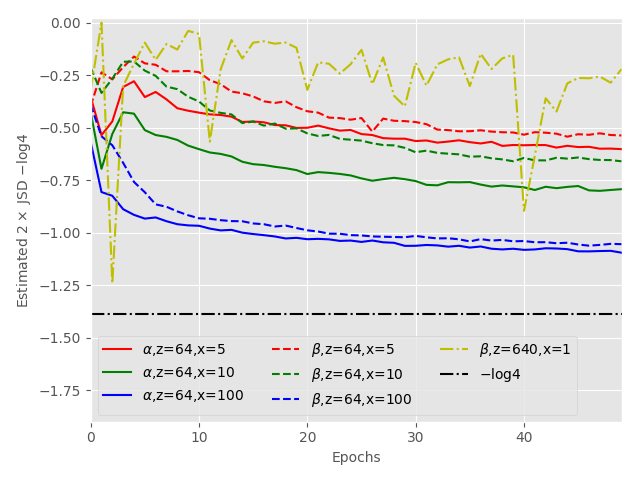}
    \caption{Comparison of the variance-reducing method from Equation~\ref{eq:bgan_alpha} and estimating $\beta$ using Monte-Carlo in Equation~\ref{eq:bgan_grad}. 
    $\alpha$ indicates the variance-reducing method, and $\beta$ is estimating $\beta$ using Monte-Carlo. 
    $z=$ indicates the number of samples from the prior, $\prior(z)$, and $x=$ indicates the number of samples from the conditional, $\gencondp{x}{z}$ used in estimation.
    Plotted are the estimated GAN distances ($2 * \text{JSD} - \log{4}$) from the discriminator.
    The minimum GAN distance, $-\log{4}$, is included for reference.
    Using the variance-reducing method gives a generator with consistently lower estimated distances than estimating $\beta$ directly.
    }
    \label{fig:variance}
\end{figure*}

\end{document}